\documentclass{article}

\usepackage{iclr2027_conference,times}
\usepackage{amsmath,amssymb,amsthm,booktabs,bm}
\usepackage{hyperref}
\usepackage{url}
\usepackage{todonotes}
\usepackage{cleveref}
\usepackage{algorithm}
\usepackage{algpseudocode}
\usepackage{enumitem}
\usepackage{svg}
\usepackage{booktabs}
\usepackage{tabularx}
\usepackage{array}
\usepackage{subcaption}

\usepackage{tikz}
\usetikzlibrary{
    positioning,
    arrows.meta,
    calc,
    fit,
    backgrounds,
    decorations.pathreplacing
}

\crefname{theorem}{Theorem}{Theorems}
\crefname{section}{Sec.}{Secs.}
\crefname{equation}{Eq.}{Eqs.}
\crefname{figure}{Fig.}{Figs.}
\crefname{table}{Tab.}{Tabs.}
\crefname{algorithm}{Algorithm}{Algorithms}

\newtheorem{theorem}{Theorem}
\theoremstyle{remark}

\newcommand{\cG}{\mathcal{G}}

\renewcommand{\vec}[1]{\mathbf{#1}}

\newcommand{\setify}[1]{\mathcal{#1}}

\newcommand{\ex}[2][]{\mathbb{E}_{#1}\!\left[#2\right]}
\newcommand{\norm}[1]{\left\| #1\right\|}
\newcommand{\cce}[2]{C_{#1}^{\mathrm{CE}}(#2)}
\newcommand{\crel}[2]{C_{#1}^{\mathrm{rel}}(#2)}
\newcommand{\rrel}{\mathcal R_{\mathrm{rel}}}
\newcommand{\rdisc}{\mathcal R_{\mathrm{disc}}}
\newcommand{\risk}{\mathcal R}
\newcommand{\gaprel}{\Delta_{\mathrm{rel}}}
\newcommand{\gapdisc}{\Delta_{\mathrm{disc}}}
\newcommand{\gaptrain}{\Delta_{\mathrm{train}}}
\newcommand{\gapopt}{\Delta_{\mathrm{opt}}}

\newcommand{\garg}{g^{\mathrm{arg}}}

\DeclareMathOperator*{\argmin}{arg\,min}
\DeclareMathOperator*{\argmax}{arg\,max}

\title{Task-Aware Discretization of Differentiable Logic Gate Networks}

\author{Thore Gerlach \\
Independent Researcher \\
\texttt{tgerlac2@gmail.com}}

\iclrfinalcopy
\begin{document}

\maketitle

\begin{abstract}
Differentiable logic gate networks (DLGNs) enable gradient-based training of highly efficient Boolean networks by relaxing discrete logic gates during training and discretizing them for inference. Standard approaches make this discretization decision locally, typically through argmax selection and confidence- or entropy-based convergence criteria. We show that local discretization can be task-suboptimal even for globally optimal relaxed solutions, with high gate confidence providing no general guarantee, and derive bounds relating task-aware gate selection to tractable interventions in the relaxed network. Motivated by these results, we study first-order downstream task information for progressive discretization and characterize when this local approximation is reliable. Experiments on convolutional DLGNs reveal a strong locality dependence: first-order scores become unreliable when directly optimized over nonlocal interventions, but accurately assess local argmax decisions for progressive freezing.
\end{abstract}

\section{Introduction}
\label{sec:introduction}

Differentiable logic gate networks (DLGNs) enable gradient-based training
of highly efficient Boolean networks through continuous relaxations of logic gates~\citep{petersen2022deep}. At inference, the learned
network can be reduced to Boolean operations, replacing conventional
floating-point computation with simple logical operations~\cite{gerlach2025evaluation}. This makes
DLGNs particularly attractive for resource-constrained inference, and
recent work has extended them to convolutional and recurrent architectures and more
compact formulations
\citep{petersen2024convolutional,buhrer2025recurrent,gerlach2026warp,wang2026learning}.

This efficiency requires a transition from relaxed training to discrete
inference. Standard DLGNs perform this transition by independently
selecting the most probable gate at each neuron, while recent work has
focused on reducing the resulting \emph{discretization gap} through
stochastic training, forward-aligned gradient estimators, temperature
scheduling, and progressive freezing
\citep{yousefi2026mind,kim2026align,wang2026learning}.
These approaches address an important training--inference mismatch.
However, a small discretization gap does not imply that the resulting
discrete network is optimal for the downstream task.

This observation motivates a complementary question:
\emph{which discrete gates should be selected to minimize downstream
task risk?}
Prior work typically discretizes DLGNs by selecting the most probable
gate at each neuron through argmax. This local probabilistic decision
need not align with the task loss of the complete discrete network.
Indeed, we show that even globally optimal relaxed DLGNs with arbitrarily
concentrated gate distributions can retain a non-vanishing discrete
optimality gap under componentwise argmax. Thus, neither relaxed
optimality nor local gate confidence certifies task-optimal
discretization.

We address this mismatch by formulating discretization directly in terms
of downstream task risk. Conditional expectations provide an ideal
task-aware selection rule but require averaging over discrete downstream
completions. We relate this criterion to interventions in the relaxed
network and derive an efficient first-order approximation whose candidate
scores reuse gradients from ordinary backpropagation. Our analysis
characterizes the approximation error through downstream mismatch and
intervention locality. Empirically, directly optimizing the first-order
surrogate can select nonlocal interventions for which its predictions
become unreliable, whereas it accurately assesses local argmax
interventions. This motivates a progressive discretization procedure that
uses downstream task information to determine when argmax decisions can
be safely committed.
\begin{figure*}[t]
    \centering
    \resizebox{\linewidth}{!}{%
        \input{overview.tex}
    }
    \vspace{-0.5cm}
    \caption{Overview of the discrete optimality gap and our task-aware
        discretization framework. The top row decomposes the gap between
        the deployed argmax network and the task-optimal discrete network;
        the bottom row summarizes the theoretical motivation, task-aware
        gate selection, its efficient first-order approximation, and
        progressive discretization.
    }
    \label{fig:overview}
\end{figure*}

Our main contributions are:
\begin{itemize}[leftmargin=*, labelsep=0.5em]
    \item \textbf{A task-risk view of discretization.}
    We distinguish the conventional discretization gap from the
    \emph{discrete optimality gap}, which measures the excess task risk
    of argmax relative to the best discrete network, and relate the
    different sources of suboptimality through an exact risk
    decomposition.

    \item \textbf{Limits of argmax.}
    We prove that globally optimal DLGNs can have arbitrarily
    concentrated gate distributions while their argmax discretizations
    remain bounded away from discrete optimality.

    \item \textbf{Tractable task-aware interventions.}
    We derive task-aware gate selection from conditional risk and bound
    the error of an efficient first-order approximation in terms of
    downstream mismatch and intervention locality. The resulting scores
    reuse gradients from ordinary training.

    \item \textbf{Locality-aware empirical analysis.}
    We show that first-order scores reliably assess local argmax
    interventions, while directly optimizing them can select nonlocal
    interventions with large approximation errors. We use this insight
    for task-aware progressive freezing and compare against existing
    discretization strategies.
\end{itemize}

\section{Related Work}
\label{sec:related}

\paragraph{Differentiable logic networks.}
\citet{petersen2022deep} introduced differentiable logic gate networks
by continuously relaxing the choice among Boolean gates and selecting
the most probable gate for discrete inference.
Subsequent work extends this framework to convolutional architectures
\citep{petersen2024convolutional}, learnable connections~\citep{fojcik2026lilogic}, more efficient parameterizations
\citep{ruttgers2026light,gerlach2026warp}, and FPGA-oriented lookup-table models
\citep{buhrer2026bitlogic}.

\paragraph{Closing the discretization gap.}
Prior work aligns relaxed training and discrete inference through
stochastic gate selection with straight-through Gumbel--Softmax
\citep{kim2023deep,yousefi2026mind,gerlach2026warp}, discrete forward
passes with approximate gradients \citep{bacellar2024differentiable},
and adaptive temperature scheduling \citep{kim2026align}.
These methods modify training; we instead study discretization through
its downstream task effect.

\paragraph{Task-aware discretization.}
A closely related problem arises in differentiable neural architecture
search, where continuous operation weights are commonly converted into
a discrete architecture by selecting their largest values.
\citet{wang2021rethinking} show that these weights need not reflect an
operation's contribution to task performance and instead propose
perturbation-based, task-aware architecture selection.
Within DLGNs, \citet{wang2026learning} introduce adaptive
discretization, progressively freezing layers once their entropy
stabilizes, allowing later layers to adapt to discrete inputs.

Our approach combines task-aware evaluation with progressive
discretization: rather than freezing gates based on local confidence, we
use their predicted downstream task effect to determine when discrete
decisions can be safely committed.

\section{Background on Differentiable Logic Gate Networks}
\label{sec:background}

A DLGN is organized into layers of differentiable gate neurons. Each
neuron $i$ receives two activations $a_{i,1}$ and $a_{i,2}$ from the
preceding layer and learns which Boolean function from a candidate set
$\cG_i$ should combine them. For two-input gates, $\cG_i$ typically
contains all $16$ Boolean functions. During training, this discrete
choice is relaxed to a weighted mixture,
\[
    a_i
    =
    \sum_{g\in\cG_i} q_{i,g}\,g(a_{i,1},a_{i,2}),
    \qquad
    q_{i,g}(\tau)
    =
    \frac{\exp(\theta_{i,g}/\tau)}
         {\sum_{h\in\cG_i}\exp(\theta_{i,h}/\tau)},
\]
where $\theta_{i,g}$ are trainable logits, $g$ represents a continued relaxed surrogate of the underlying gate and the temperature $\tau>0$
controls the concentration of the gate distribution. After training,
each differentiable neuron is converted into a Boolean gate for
efficient discrete inference.

Across the $N$ neurons, the learned gate distributions induce a
factorized distribution over discrete networks, $q=\prod_{i\le N} q_i$.
Standard discretization independently selects each marginal mode,
$\garg_i=\argmax_{g\in\cG_i}q_{i,g}$, making
$\vec g^{\mathrm{arg}}$ a maximum-probability assignment under $q$,
up to ties.
Standard discretization therefore selects gates according to their
learned probabilities rather than their effect on task risk. This
distinction motivates a closer separation between existing notions of
training--inference mismatch and the quality of the resulting discrete
gate assignment.

\subsection{Discretization Gaps}
\label{sec:background-gaps}

Let $F_q$ denote the prediction of the relaxed network under gate
distributions $q$, and let $f_{\vec g}$ denote the discrete network obtained from an
assignment $\vec g=(g_1,\ldots,g_N)$. For data
$(X,Y)\sim\setify{D}$ and task loss $\ell$, its population risks are $\rrel(q)
    =
    \ex[(X,Y)\sim\setify{D}]
    {\ell(F_q(X),Y)},
$ and $
    \rdisc(\vec g)
    =
    \ex[(X,Y)\sim\setify{D}]
    {\ell(f_{\vec g}(X),Y)}$.
Discretizing a trained DLGN can degrade task performance for several
distinct reasons. Let
$
    \rrel^*
    :=
    \min_q \rrel(q),
    $ and $
    \rdisc^*
    :=
    \min_{\vec g} \rdisc(\vec g),
$
denote the optimal risks attainable by the relaxed and discrete model
classes, respectively. For a trained gate distribution $q$ and its
componentwise argmax assignment $\vec g^{\mathrm{arg}}$, we distinguish
\begin{align}
    \gaptrain:=\rrel(q)-\rrel^*,\quad
    \gaprel:=\rrel^*-\rdisc^*,\quad
    \gapdisc:=\rdisc(\vec g^{\mathrm{arg}})-\rrel(q).\nonumber
\end{align}
The training gap measures suboptimality within the relaxed model class,
the relaxation gap captures the difference between the best relaxed and
discrete models, and the discretization gap measures the change in task
risk when the trained relaxed model is converted to its argmax network.
The training gap is non-negative by definition. Moreover, since the relaxed class contains every discrete network, $\gaptrain\ge 0$ and $\gaprel\le 0$, while $\gapdisc$ is signed.

These gaps are distinct from the question of whether argmax produces a
task-optimal discrete network. We define the \emph{discrete optimality
gap} as
\begin{equation}
    \gapopt
    :=\gaprel
        +
        \gaptrain
        +
        \gapdisc =
    \rdisc(\vec \garg)-\rdisc^*
    \geq 0.
    \nonumber
\end{equation}
Thus, a suboptimal deployed network may result from imperfect relaxed
training, from the relaxation itself, or from converting the learned
relaxed model into a discrete gate assignment.

\citet{kim2026align} further decomposes the conventional discretization gap into a \emph{selection gap} from replacing soft gate mixtures by argmax gates and a \emph{computation gap} from subsequently evaluating them as Boolean functions. The computation gap vanishes for binary inputs, where the continuous gate extensions coincide with their truth tables. Our focus is complementary: even if this training--inference gap vanishes, argmax need not minimize risk among discrete networks.

\section{A Framework for Task-Aware Discretization}
\label{sec:framework}

Prior work has related accurate argmax discretization to the concentration
of learned gate distributions: DLGN neurons are observed to converge toward
individual gates, while diffuse gate distributions have been associated with
discretization errors~\cite{petersen2022deep,petersen2024convolutional,yousefi2026mind,ruttgers2026light,wang2026learning}.
This suggests that argmax should become reliable once the learned
distributions are sufficiently concentrated.
We show that local concentration alone cannot provide such a guarantee,
even when the relaxed network is globally optimal.

Let $\delta_g$ denote the distribution assigning probability one to
gate $g\in\cG$. We call a continuous function
$D$ on gate distributions a \emph{local dispersion measure} if
$D(\delta_g)=0$ for every $g\in\cG$.
This includes, for example, Shannon entropy, Gini impurity, and the
non-modal probability $1-\max_g q(g)$.
\begin{theorem}[Local concentration does not guarantee task-optimal argmax]
\label{thm:concentration-insufficient}
Let $\ell$ be a loss and let $\setify{Z}$ denote the prediction space of
the relaxed network.
Suppose there exist a data distribution $\setify{D}$ and predictions
$z_0,z^*\in\setify{Z}$ such that
\begin{equation}
    z^*
    \in
    \argmin_{z\in\setify{Z}}
    \ex[(X,Y)\sim\setify{D}]
    {\ell(z,Y)}\quad \text{and}\quad 
    \ex[(X,Y)\sim\setify{D}]
    {\ell(z_0,Y)-\ell(z^*,Y)}
    = c > 0.
    \label{eq:risk-separation}
\end{equation}
Then there exists a sequence of DLGNs with $N\to\infty$ trainable
neurons and globally optimal relaxed gate distributions $q^*$ such that,
\begin{equation}
    \gapopt
    =c,
    \quad \text{and}\quad 
    \max_{1\le i\le N}D(q_{i}^*)
    \longrightarrow 0.
    \label{eq:local-concentration}
\end{equation}
\end{theorem}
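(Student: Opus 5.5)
We construct an explicit family of DLGNs in which the relaxed optimum is reached by a diffuse aggregate of many near-deterministic neurons, while componentwise argmax collapses that aggregate onto the suboptimal prediction $z_0$.

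\textbf{Step 1: a single-output network.} The network reads a constant input bit, or a bit that carries no information about $Y$, so its output is independent of $X$. Its $N$ trainable neurons each compute, under their argmax gate, a fixed bit $b_i$. The output $F_q$ is an aggregate of these neurons, for example a mean over the $N$ activations, possibly combined through fixed non-trainable readout gates. We use only two candidate gates per neuron, or restrict the remaining logits to $-\infty$. One gate (e.g.\ constant $0$ or the identity on a $0$-input) pushes the output toward $z_0$; the other (constant $1$) pushes it toward $z^*$.

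\textbf{Step 2: relaxed optimality.} Give each neuron the distribution $q_i^*=(1-\epsilon_N)\delta_{g_0}+\epsilon_N\delta_{g_1}$ with $\epsilon_N\to 0$. A fixed readout map $\phi_N$ takes the relaxed aggregate value $\epsilon_N$ to $z^*$ and the all-$g_0$ discrete value $0$ to $z_0$. The relaxed prediction is then $z^*$ for every $X$, so $\rrel(q^*)=\mathbb E\,\ell(z^*,Y)$. By \cref{eq:risk-separation} this is a lower bound on the risk of any $X$-independent predictor in $\setify{Z}$, and every prediction of this network is $X$-independent. Hence $q^*$ is globally optimal and $\gaptrain=0$.

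\textbf{Step 3: the argmax network and the gap.} Since $\epsilon_N<1/2$, argmax selects $g_0$ at every neuron, so $f_{\vec g^{\mathrm{arg}}}\equiv z_0$ and $\rdisc(\vec g^{\mathrm{arg}})=\mathbb E\,\ell(z_0,Y)$. We choose the discrete readout so that some discrete assignment attains $z^*$. For instance, let $\phi_N$ map every aggregate value $k/N$ with $k\ge1$ to $z^*$; switching any single neuron to $g_1$ then gives $\rdisc^*=\mathbb E\,\ell(z^*,Y)$. Then $\gapopt=c$ exactly.

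\textbf{Step 4: concentration.} Continuity of $D$ together with $D(\delta_{g_0})=0$ gives $D(q_i^*)\to0$ as $\epsilon_N\to0$, uniformly in $i$ because all $q_i^*$ coincide.

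\textbf{Main obstacle.} The delicate step is the readout $\phi_N$, which must satisfy three requirements at once. It must be realizable inside the DLGN architecture, for example as a fixed group-sum or threshold layer as used in DLGN classifiers. It must be evaluated on relaxed values in a way that sends the tiny mean $\epsilon_N$ to $z^*$. And it must send the discrete value $0$ to $z_0$.

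The first thing to try is a fixed OR-tree over the $N$ neurons. Its relaxed output is $1-(1-\epsilon_N)^N$, which tends to $1-e^{-\lambda}$ when $\epsilon_N=\lambda/N$. Choosing $\lambda$ so that this limit equals the readout value corresponding to $z^*$, and taking $z_0$ to be the value of the all-zero readout, makes $N\to\infty$ genuinely necessary. A discrete OR with one active neuron gives output $1$, so we also need $1$ to map to $z^*$ or better. If $\setify{Z}$ is only assumed abstract, we instead take the prediction space to be the image of a fixed decoder of the output activation and define the decoder accordingly.

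If exactness at finite $N$ fails, the fallback is to adjust $\epsilon_N$ so that the relaxed output hits $z^*$ exactly for each $N$, which requires continuity of the decoder in the output activation.
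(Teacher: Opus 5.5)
Your skeleton matches the paper's: constant gates $g_0,g_1$, an $X$-independent output, $q_i^*=(1-\epsilon_N)\delta_{g_0}+\epsilon_N\delta_{g_1}$, global optimality because every relaxed prediction is a constant in $\setify{Z}$, argmax collapsing to all-$g_0$, and continuity of $D$. Steps~2 and~4 are fine. The gap is the readout. You rightly call it the main obstacle, but you leave it open.

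The readout you actually use in Step~3 sends $0$ to $z_0$ and every positive aggregate to $z^*$. That is a discontinuous threshold, so it cannot arise from the relaxed forward pass of gates (whose surrogates are continuous) or from a GroupSum head. It also makes the statement vacuous: with it, a single neuron ($N=1$, $\epsilon\to 0$) already gives $\gapopt=c$, so the readout rather than the gate distributions produces the effect.

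The OR-tree does not repair this. Discretely, the OR can only output $0$ or $1$, while the relaxed output is $1-(1-\epsilon_N)^N<1$. With any injective (e.g.\ affine) decoder $\psi$, the only discrete predictions are therefore $z_0$ and $\psi(1)\neq z^*$. Whenever the minimizer is unique this gives $\rdisc^*>\risk(z^*)$ and hence $\gapopt<c$; for squared loss with $1-(1-\epsilon_N)^N<1/2$ you even get $\gapopt=0$. Rescuing it requires a clipped, non-injective decoder, which is again an ad hoc nonlinearity. The continuity fallback only fixes the relaxed side, not this discrete one.

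The missing idea is that nothing nonlinear is needed once $N\epsilon_N$ is an integer. Take $\epsilon_N=m/N$ for a fixed integer $m\ge 1$ and the linear, GroupSum-like readout $f_{\vec g}(X)=z_0+\frac{z^*-z_0}{m}\sum_{i\le N}g_i$. Because the readout is linear in the neuron outputs, three things hold:
\begin{itemize}
\item The relaxed pass, in which every neuron outputs $m/N$, yields exactly $z^*$.
\item Every discrete assignment with exactly $m$ copies of $g_1$ yields the same $z^*$.
\item For $N>2m$, argmax yields $z_0$.
\end{itemize}
In your notation, this is just an affine $\phi_N$ on the mean aggregate with $\phi_N(0)=z_0$ and $\phi_N(m/N)=z^*$. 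Your requirement that every $k\ge 1$ map to $z^*$ was self-imposed. This is the paper's proof, and it is also where $N\to\infty$ becomes essential: with $m$ fixed, concentration $m/N\to 0$ forces it.
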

The proof is found in~\cref{app:anti-argmax-proof}.
This theorem highlights a limitation of purely local confidence measures, that is local concentration of the
gate distributions does not, by itself, control their downstream task-level
effect. Concrete instances of~\cref{thm:concentration-insufficient} for
cross-entropy and squared loss are given in~\cref{app:counterexamples}.

\subsection{Optimal Task-Aware Discretization via Conditional Expectations}
\label{sec:conditional-expectation}

The previous result shows that local concentration does not determine which
discrete gate assignment minimizes task risk. We therefore consider a
discretization rule that evaluates gate choices directly through their
downstream loss.

Suppose that gates $g_1,\ldots,g_{i-1}$ have already been fixed. For a
candidate gate $k\in\cG$ at neuron $i$, define its conditional task risk as
\begin{equation}
    \cce{i}{k}
    :=
    \ex[\vec{G}_{>i}\sim q_{>i}]
    {
        \rdisc(\vec g_{<i},k,\vec{G}_{>i})
    },
    \label{eq:conditional-task-risk}
\end{equation}
where $\vec g_{<i}=(g_1,\ldots,g_{i-1})$ and
$q_{>i}=\prod_{j>i}q_j$.
Selecting
$
    g_i^{\mathrm{CE}}
    \in
    \argmin_{k\in\cG}
    \cce{i}{k}
$
sequentially is the method of conditional expectations~\citep{motwani1996randomized} and yields a fully discrete assignment
$\vec{g}^{\mathrm{CE}}$. 
Since each decision minimizes the
conditional expected risk, $\vec{g}^{\mathrm{CE}}$ satisfies
\begin{equation}
    \rdisc^*
    \leq
    \rdisc(\vec{g}^{\mathrm{CE}})
    \leq
    \ex[\vec{G}\sim q]
     {\rdisc(\vec{G})}.
    \label{eq:ce-risk-bound}
\end{equation}
A proof is given in~\cref{app:ce-proof}.
Evaluating~\cref{eq:conditional-task-risk} exactly requires
averaging over the exponentially many discrete completions of the remaining
network. We therefore seek a quantity that can be evaluated directly through
the trained relaxed DLGN.

For a fixed prefix $\vec g_{<i}$ and candidate gate $k$, let
$\mu_{i,k}(X)
:=\ex[\vec G_{>i}\sim q_{>i}]
{f_{\vec g_{<i},k,\vec G_{>i}}(X)}$
denote the mean prediction and $\sigma^2_{i,k}(X):=\mathbb E_{\vec G_{>i}\sim q_{>i}}[
            \norm{
                f_{\vec g_{<i},k,\vec G_{>i}}(X)
                -
                \mu_{i,k}(X)
            }^2
    ],$ the variance over discrete downstream completions.
The corresponding relaxed intervention is
$F_{i,k}(X):=F_{\vec g_{<i},k,q_{>i}}(X)$, with risk
$\crel{i}{k}:=\ex[(X,Y)\sim\setify{D}]{
\ell(F_{i,k}(X),Y)}$.
In general, $F_{i,k}\neq\mu_{i,k}$ because relaxed propagation need not
equal averaging over discrete completions. The following result quantifies
the resulting approximation error.

\begin{theorem}[Approximation of conditional task risk]
\label{thm:ce-relaxed}
Assume that $\ell$ is $L$-Lipschitz and define
$V_{i,k}:=\ex[(X,Y)\sim\setify{D}]{\sigma_{i,k}(X)}$ and
$P_{i,k}:=\ex[(X,Y)\sim\setify{D}]{\norm{\mu_{i,k}(X)-F_{i,k}(X)}}$.
Then
\begin{equation}
    \left|
        \cce{i}{k}
        -
        \crel{i}{k}
    \right|
    \leq
    L\left(V_{i,k}+P_{i,k}\right).
    \label{eq:ce-relaxed-bound}
\end{equation}
\end{theorem}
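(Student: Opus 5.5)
We start by writing the conditional task risk pointwise. By \cref{eq:conditional-task-risk} and Fubini (everything is an expectation of a bounded or integrable quantity over a finite set of completions), we have
\[
    \cce{i}{k}
    =
    \ex[(X,Y)\sim\setify{D}]{
        \ex[\vec G_{>i}\sim q_{>i}]{
            \ell\bigl(f_{\vec g_{<i},k,\vec G_{>i}}(X),Y\bigr)
        }
    }.
\]
So the difference $\cce{i}{k}-\crel{i}{k}$ is the $\setify{D}$-expectation of
$\ex[\vec G_{>i}]{\ell(f_{\vec g_{<i},k,\vec G_{>i}}(X),Y)}-\ell(F_{i,k}(X),Y)$.
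The plan is to insert the intermediate term $\ell(\mu_{i,k}(X),Y)$ and split by the triangle inequality into a \emph{variance term} and a \emph{propagation term}. Each is then bounded pointwise in $(X,Y)$, and we integrate at the end.

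For the variance term, fix $(X,Y)$ and write $f_G$ for $f_{\vec g_{<i},k,\vec G_{>i}}(X)$. The $L$-Lipschitz property gives
\[
    \left|\ex[\vec G_{>i}]{\ell(f_G,Y)}-\ell(\mu_{i,k}(X),Y)\right|
    \le
    L\,\ex[\vec G_{>i}]{\norm{f_G-\mu_{i,k}(X)}}
    \le
    L\,\sigma_{i,k}(X).
\]
The last step is Jensen's inequality (or Cauchy--Schwarz), $\mathbb E\norm{Z-\mathbb EZ}\le(\mathbb E\norm{Z-\mathbb EZ}^2)^{1/2}$. For the propagation term, Lipschitzness directly gives
$|\ell(\mu_{i,k}(X),Y)-\ell(F_{i,k}(X),Y)|\le L\norm{\mu_{i,k}(X)-F_{i,k}(X)}$.

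Finally, we take absolute values inside the outer expectation, which is valid since $|\mathbb E[\cdot]|\le\mathbb E|\cdot|$, and add the two pointwise bounds. Integrating over $(X,Y)\sim\setify{D}$ turns them into $L V_{i,k}$ and $L P_{i,k}$, which yields \cref{eq:ce-relaxed-bound}.

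The only step needing care is the variance term. We must use Lipschitzness in the prediction argument with respect to the same norm $\norm{\cdot}$ used in $\sigma_{i,k}$. We also need $\mu_{i,k}(X)$ to lie in the domain of $\ell$; if the prediction space is not convex, we interpret $\ell$ as defined on its convex hull, which is where $F_{i,k}$ already lives. Jensen then converts the mean absolute deviation into the standard deviation $\sigma_{i,k}$. The remaining steps are routine.
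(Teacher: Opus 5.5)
Your proposal is correct and matches the paper's proof. The paper likewise inserts the mean-completion risk $\mathbb{E}[\ell(\mu_{i,k}(X),Y)]$ as an intermediate term, bounds the completion-variability part by Lipschitzness followed by Cauchy--Schwarz to obtain $L V_{i,k}$, and bounds the propagation part directly by $L P_{i,k}$; your pointwise-then-integrate bookkeeping and your remark that $\ell$ must be defined (and Lipschitz in the same norm) at $\mu_{i,k}(X)$ are small clarifications that the paper leaves implicit.
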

The proof is given in~\cref{app:ce-proof}.
The two terms in~\cref{eq:ce-relaxed-bound} identify distinct sources of
approximation error. $V_{i,k}$ measures variability among discrete downstream
completions, whereas $P_{i,k}$ measures the mismatch between their mean
prediction and the deterministic relaxed forward pass. Hence relaxed
interventions approximate the ideal conditional task criterion whenever
both downstream variability and relaxation-induced propagation mismatch
are small.

\subsection{First-Order Task-Aware Gate Selection}
\label{sec:first-order-selection}

Although relaxed interventions avoid averaging over discrete downstream
completions, evaluating $\crel{i}{k}$ for every candidate still requires
many forward passes. We therefore estimate the task effect of selecting
gate $k$ from local first-order information. Let $a_i(\cdot)$ be the relaxed
output of neuron $i$ and $a_{i,k}(\cdot)$ its output under gate $k$. We define
the first-order score
\begin{equation}
    S_i(k)
    :=
    \ex[(X,Y)\sim\setify{D}]
    {
        \frac{\partial \ell}{\partial a_i}
        \left(a_{i,k}(X)-a_i(X)\right)
    }.
    \nonumber
\end{equation}
Negative scores predict a decrease in task risk, while positive scores
predict an increase.
\begin{theorem}[First-order task-aware approximation]
\label{thm:first-order-selection}
Assume the conditions of~\cref{thm:ce-relaxed} and that the sample-wise
downstream loss is twice differentiable with respect to the scalar
activation $a_i$ and upper bounded by $M_i>0$. For each $k\in\cG$, let
$
    \epsilon_{i,k}
    :=
    L_\ell(V_{i,k}+P_{i,k})
    +
    \frac{M_i}{2}
    \ex[(X,Y)\sim\setify{D}]
    {
        \Delta a_{i,k}(X)^2
    }
$
and
$
    \epsilon_i:=\max_{k\in\cG}\epsilon_{i,k}.
$
With the first-order choice
$k_i^{\mathrm{FO}}\in\argmin_{k\in\cG}S_i(k)$, we then obtain
\begin{equation}
    \left|
        \cce{i}{k}
        -
        \left[
            \crel{i}{\mathrm{soft}}+S_i(k)
        \right]
    \right|
    \leq
    \epsilon_{i,k},
    \quad \text{and thus} \quad
    \cce{i}{k_i^{\mathrm{FO}}}
    -
    \min_{k\in\cG}\cce{i}{k}
    \leq
    2\epsilon_i.
    \label{eq:first-order-ce-bound}
\end{equation}
Moreover, relative to
$k_i^{\mathrm{arg}}\in\argmax_{k\in\cG}q_i(k)$, let
$\Gamma_i:=S_i(k_i^{\mathrm{arg}})-S_i(k_i^{\mathrm{FO}})$.
Then
\begin{equation}
    \Gamma_i>2\epsilon_i
    \quad\Longrightarrow\quad
    \cce{i}{k_i^{\mathrm{FO}}}
    <
    \cce{i}{k_i^{\mathrm{arg}}}.
    \label{eq:fo-beats-argmax}
\end{equation}
\end{theorem}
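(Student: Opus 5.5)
The proof splits the error into two pieces, one handled by \cref{thm:ce-relaxed} and one by a second-order Taylor expansion, and then derives the two selection statements by standard comparison arguments. Throughout, $\crel{i}{\mathrm{soft}}=\rrel(q)$ with the prefix $\vec g_{<i}$ fixed denotes the relaxed risk in which neuron $i$ still outputs its soft mixture $a_i$. The intervention $F_{i,k}$ differs from it only in that $a_i$ is replaced by $a_{i,k}$, everything downstream being the same relaxed computation. Write $\Delta a_{i,k}:=a_{i,k}-a_i$.

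\textbf{Step 1.} Use the triangle inequality,
\[
\left|\cce{i}{k}-\crel{i}{\mathrm{soft}}-S_i(k)\right|
\le
\left|\cce{i}{k}-\crel{i}{k}\right|
+
\left|\crel{i}{k}-\crel{i}{\mathrm{soft}}-S_i(k)\right|.
\]
The first term is at most $L_\ell(V_{i,k}+P_{i,k})$ by \cref{thm:ce-relaxed}.

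\textbf{Step 2.} This step controls the second term. For each sample $(X,Y)$, let $\phi_X(a):=\ell(\text{downstream relaxed network}(a),Y)$ be the sample-wise loss viewed as a function of the scalar activation of neuron $i$. Then $\crel{i}{k}-\crel{i}{\mathrm{soft}}=\ex{\phi_X(a_i+\Delta a_{i,k})-\phi_X(a_i)}$ and $S_i(k)=\ex{\phi_X'(a_i)\Delta a_{i,k}}$.

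Taylor's theorem with Lagrange remainder gives a pointwise difference of $\tfrac12\phi_X''(\xi)\Delta a_{i,k}^2$. The hypothesis is that the second derivative is bounded by $M_i$; I read ``upper bounded by $M_i$'' as $|\phi''|\le M_i$ on the segment. Taking expectations then yields the bound $\tfrac{M_i}{2}\ex{\Delta a_{i,k}(X)^2}$, which proves the first inequality of \cref{eq:first-order-ce-bound}.

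The main delicate point is exactly this step. It needs the downstream relaxed map to be a function of the scalar $a_i$ alone, with the other neurons of the layer held fixed. It also needs the curvature bound to hold uniformly along the segment between $a_i$ and $a_{i,k}$. If only an upper bound (not an absolute-value bound) were intended, the argument would have to be restricted accordingly, so I will state explicitly that $M_i$ bounds $|\partial^2\ell/\partial a_i^2|$.

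\textbf{Step 3.} Set $\hat C(k):=\crel{i}{\mathrm{soft}}+S_i(k)$, so that $|\cce{i}{k}-\hat C(k)|\le\epsilon_{i,k}\le\epsilon_i$ for all $k$. Let $k^*$ minimize $\cce{i}{\cdot}$. Since $k_i^{\mathrm{FO}}$ minimizes $\hat C$,
\[
\cce{i}{k_i^{\mathrm{FO}}}\le\hat C(k_i^{\mathrm{FO}})+\epsilon_i\le\hat C(k^*)+\epsilon_i\le\cce{i}{k^*}+2\epsilon_i .
\]
This is the second claim in \cref{eq:first-order-ce-bound}.

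For \cref{eq:fo-beats-argmax}, the same sandwich gives
\[
\cce{i}{k_i^{\mathrm{arg}}}-\cce{i}{k_i^{\mathrm{FO}}}
\ge
\hat C(k_i^{\mathrm{arg}})-\hat C(k_i^{\mathrm{FO}})-2\epsilon_i
=\Gamma_i-2\epsilon_i>0 .
\]
This holds because the constant $\crel{i}{\mathrm{soft}}$ cancels in the difference of $\hat C$ values, completing the proof.
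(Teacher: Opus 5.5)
Your proposal is correct and matches the paper's proof essentially step for step. You use a triangle inequality to split off the \cref{thm:ce-relaxed} term, bound the second-order Taylor remainder by $\frac{M_i}{2}\mathbb{E}[\Delta a_{i,k}(X)^2]$ (the paper's proof also implicitly reads the curvature hypothesis as the absolute bound $|\phi_i''|\le M_i$), and then apply the same sandwich comparisons, using that $\crel{i}{\mathrm{soft}}$ does not depend on $k$.
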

The proof can be found in~\cref{app:first-order-proof}.
The approximation error is separated into two
sources. The term $L(V_{i,k}+P_{i,k})$ measures the error incurred by
replacing discrete downstream completions with the relaxed network, while
the curvature term controls the error of the local first-order
approximation. 

\subsection{Task-Aware Progressive Freezing}
\label{sec:progressive-discretization}

The population score $S_i(k)$ is not directly available during training.
We estimate it on the current minibatch $\setify{B}_t$ by averaging
$s_{i,k}=(\partial\ell/\partial a_i)(a_{i,k}-a_i)$, yielding
$\widehat{S}_{i,t}(k)=|\setify{B}_t|^{-1}
\sum_{b\in\setify{B}_t}s_{i,k}^{(b)}$.
The downstream sensitivities $\partial\ell/\partial a_i$ are already
available from backpropagation, while candidate gate outputs can be
evaluated locally without additional downstream forward passes.

For progressive discretization, we use this task information
conservatively to assess the gate already preferred by the learned
distribution. Let
$k_i^{\mathrm{arg}}=\argmax_{k\in\cG_i}q_i(k)$.
Because minibatch scores are noisy, we maintain an exponential moving
average (EMA) of its predicted freeze shock,
\[
    \mu_i^{(t)}
    =
    \beta\mu_i^{(t-1)}
    +(1-\beta)
    \widehat S_{i,t}(k_i^{\mathrm{arg}}).
\]
A neuron is frozen to $k_i^{\mathrm{arg}}$ once this score remains
negative for $P$ consecutive evaluations. Thus, unlike confidence- or
entropy-based criteria, freezing is triggered by the predicted downstream
task effect of committing the argmax gate. The procedure is summarized
in~\cref{alg:progressive-discretization}.

\begin{algorithm}[t]
\caption{Task-Aware Progressive Freezing}
\label{alg:progressive-discretization}
\begin{algorithmic}[1]
\Require scoring interval $K$, EMA factor $\beta$, patience $P$
\State Train the relaxed network for an initial burn-in period
\For{each subsequent training step $t$}
    \State Perform a standard training step
    \If{$t \bmod K = 0$}
        \For{each unfrozen decision $i$}
            \State $k_i^{\mathrm{arg}}
                \gets \argmax_{k\in\cG_i}q_i(k)$
            \State Compute $\widehat S_{i,t}(k_i^{\mathrm{arg}})$
                   and update $\mu_i$
            \If{$\mu_i<0$ for $P$ evaluations}
                \State Freeze $i$ to $k_i^{\mathrm{arg}}$
            \EndIf
        \EndFor
    \EndIf
\EndFor
\end{algorithmic}
\end{algorithm}

\section{Experiments}
\label{sec:experiments}

Our experiments address four questions. First, can local statistics
predict the task effect of discretizing individual neurons? Second, why
does directly optimizing the first-order task-aware surrogate become
unreliable? Third, can first-order task information instead provide a
robust criterion for progressive argmax freezing? Finally, how does this
procedure compare with existing discretization strategies? We study
these questions on convolutional DLGNs, beginning with the local
intervention analysis in \cref{fig:correlation-hexbin}.

\paragraph{Experimental setup.}
We evaluate on CIFAR-10, using 45k training and 5k validation examples,
cross-entropy loss, and GroupSum aggregation to obtain class logits.
We study two convolutional DLGN configurations chosen to closely follow
established architectures and training procedures. \emph{Compact-S}
follows the method described in~\citep{wang2026learning}, using uniform thresholding, Adam
optimization, and connection learning through resampling; models are
trained for 600k steps. \emph{TreeLogicNet-S} follows the TreeLogicNet
architecture~\cite{petersen2024convolutional}
, using distributive thresholding, AdamW with weight decay
$0.002$, and 200k training steps. Unless stated otherwise, we evaluate
five independent seeds per architecture. The two settings test whether
our observations persist across distinct DLGN architectures and training
procedures. Further architectural and training details are provided in
Appendix~\ref{app:experimental-details}.

\subsection{Predicting Discretization Effects}
\label{sec:exp-prediction}

\begin{figure*}[t]
    \centering
    \includegraphics[width=1.0\textwidth]{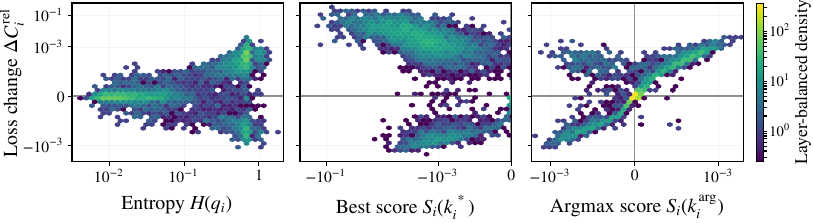}
    \vspace{-0.5cm}
    \caption{\textbf{Predicting freeze effects.}
    Measured loss change from freezing to the argmax gate versus gate
    entropy (left), from freezing to the score-selected gate
    $k_i^{*}=\argmin_k S_i(k)$ versus its first-order score
    (center), and from freezing to the argmax gate versus its first-order
    score (right). Score selection is poorly aligned with the
    realized effect, whereas the argmax freeze shock is strongly
    predictive.}
    \label{fig:correlation-hexbin}
\end{figure*}

\begin{table*}[t]
    \centering
    \caption{\textbf{Layer-wise Spearman correlations} between local
    discretization criteria and realized freeze effects. Values are mean
    $\pm$ sample standard deviation across five seeds, after averaging
    checkpoints within each seed (four for Compact-S and five for
    TreeLogicNet-S).}
    \label{tab:layerwise-correlations}
    \vspace{0.2cm}
    \scriptsize
    \setlength{\tabcolsep}{4pt}
    \begin{tabularx}{\textwidth}{
        l
        *{3}{>{\centering\arraybackslash}X}
        *{3}{>{\centering\arraybackslash}X}
    }
        \toprule
        & \multicolumn{3}{c}{\textbf{Compact-S}}
        & \multicolumn{3}{c}{\textbf{TreeLogicNet-S}} \\
        \cmidrule(lr){2-4}
        \cmidrule(lr){5-7}
        \textbf{Layer}
        & \textbf{Entropy}
        & \textbf{Score}
        & \textbf{Argmax}
        & \textbf{Entropy}
        & \textbf{Score}
        & \textbf{Argmax} \\
        \midrule
        Conv 1
        & $+0.301 \pm 0.045$
        & $-0.576 \pm 0.038$
        & $+0.742 \pm 0.047$
        & $+0.450 \pm 0.082$
        & $-0.356 \pm 0.106$
        & $+0.583 \pm 0.081$ \\

        Conv 2
        & $+0.370 \pm 0.053$
        & $-0.625 \pm 0.021$
        & $+0.792 \pm 0.053$
        & $+0.257 \pm 0.063$
        & $-0.291 \pm 0.056$
        & $+0.766 \pm 0.039$ \\

        Conv 3
        & $+0.281 \pm 0.018$
        & $-0.304 \pm 0.074$
        & $+0.868 \pm 0.021$
        & $+0.076 \pm 0.016$
        & $+0.042 \pm 0.049$
        & $+0.762 \pm 0.034$ \\

        Conv 4
        & $+0.048 \pm 0.045$
        & $-0.242 \pm 0.081$
        & $+0.991 \pm 0.004$
        & $+0.024 \pm 0.019$
        & $+0.454 \pm 0.045$
        & $+0.676 \pm 0.013$ \\
        \bottomrule
    \end{tabularx}
\end{table*}

\Cref{fig:correlation-hexbin} compares three local quantities with the
realized task-loss change from fixing a single gate while keeping the
remaining network relaxed, defined as
$\Delta C_i^{\mathrm{rel}}(k)
:=C_i^{\mathrm{rel}}(k)-C_i^{\mathrm{rel}}(\mathrm{soft})$.
For entropy and the argmax freeze shock, we measure this change after
freezing to $k_i^{\mathrm{arg}}=\argmax_k q_i(k)$; for task-aware
selection, we instead freeze to
$k_i^{*}=\argmin_k S_i(k)$.
Gate entropy shows little relation to the resulting argmax freeze effect.
Surprisingly, selecting the gate with the lowest first-order score does
not improve this correspondence: predicted beneficial interventions can
still substantially increase the loss. In contrast, the first-order
score of the argmax gate is strongly aligned with the realized effect,
including its sign.

We quantify these relationships using Spearman rank correlation.
As summarized in~\cref{tab:layerwise-correlations}, the qualitative
pattern persists across layers, checkpoints, seeds, and both
architectures. Argmax selections are consistently strongly positively
correlated with the realized freeze effect, whereas entropy is generally
weaker and score selections are substantially less consistent, including
strong negative correlations in several layers. Thus, first-order task
information is useful for \emph{assessing} the argmax intervention but
becomes unreliable when directly optimized over candidate gates. We
investigate this discrepancy in \cref{fig:locality}, focusing on the
locality of the corresponding interventions.

\subsection{Locality of First-Order Interventions}
\label{sec:exp-locality}

\label{sec:exp-freezing}

\begin{figure*}[t]
    \centering
    \includegraphics[width=1.0\textwidth]{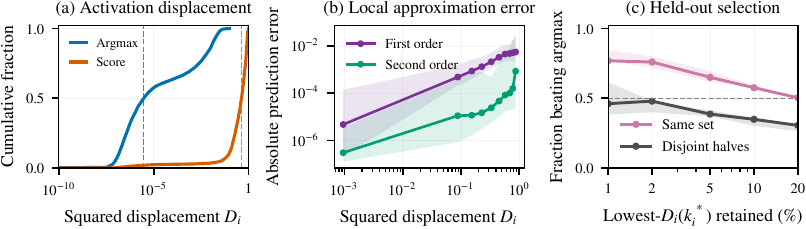}
    \vspace{-0.5cm}
    \caption{\textbf{Locality of first-order interventions.}
(a) Score selection produces substantially larger activation
displacements than argmax, with median squared displacements of
$0.42$ and $2.9\times10^{-6}$, respectively.
(b) Approximation error increases with displacement for both first- and
second-order scores, with median within-neuron Spearman correlations of
$0.88$ and $0.80$, respectively; second order substantially reduces the
typical error.
(c) The apparent advantage of selecting low-displacement task-aware
interventions disappears when gate selection and evaluation use
disjoint validation data.}
    \label{fig:locality}
\end{figure*}

We next investigate why the first-order score is predictive for argmax
interventions but unreliable when directly optimized over candidate
gates. \Cref{fig:locality}a first examines how far a discrete intervention moves
a neuron from the relaxed operating point. We quantify this by its
squared activation displacement
$D_i(k):=\mathbb E_{(X,Y)\sim\setify{D}}[\Delta a_{i,k}^2(X)]$, where $\Delta a_{i,k}(\cdot)=a_{i,k}(\cdot)-a_i(\cdot)$, $a_i(\cdot)$ is the relaxed
output of neuron $i$ and $a_{i,k}(\cdot)$ its output under gate $k$. 
Argmax and score selection operate in strikingly different regimes:
their median displacements are $2.9\times10^{-6}$ and $0.42$,
respectively. Thus, while argmax typically constitutes a highly local
intervention, directly minimizing $S_i(k)$ can select gates far from the
point at which the first-order approximation is evaluated.

\Cref{fig:locality}b relates displacement to the approximation error.
For first order, we measure the prediction error
$|\Delta C_i^{\mathrm{rel}}(k)-S_i(k)|$; for second order, we
add the directional curvature correction, that is $|\Delta C_i^{\mathrm{rel}}(k)-S_i(k)-\frac{1}{2}\Delta a_{i,k}^{\top}H_i\Delta a_{i,k}|$.
Across the 16 candidate gates of each neuron, approximation error grows
strongly with $D_i(k)$, with median within-neuron Spearman correlations
of $0.88$ and $0.80$ for first and second order, respectively.
Second order substantially reduces the typical error, confirming that
curvature explains an important part of the first-order residual, but
the remaining displacement dependence shows that nonlocal interventions
remain difficult to approximate. This is consistent with the curvature
term in \cref{thm:first-order-selection}, whose bound grows with
$D_i(k)$.

Finally, \cref{fig:locality}c asks whether restricting score
selection to local interventions can recover a reliable advantage over
argmax. Among neurons where the two rules disagree, we retain increasing
fractions with the smallest $D_i(k_i^*)$. When the same validation data
are used for selection and evaluation, the most local score-selected choices
appear to outperform argmax frequently. This advantage disappears when
gates are selected on one validation half and evaluated on the other.
Thus, although locality governs approximation quality, restricting
selection to local interventions does not yield a robust task-aware
replacement rule. We therefore use first-order information
conservatively to determine when argmax decisions can be committed.

\subsection{Progressive Freezing Dynamics}
\label{sec:exp-freezing}

\begin{figure*}[t]
    \centering
    \begin{subfigure}{0.45\textwidth}
        \centering
        \includegraphics[width=1.0\textwidth]{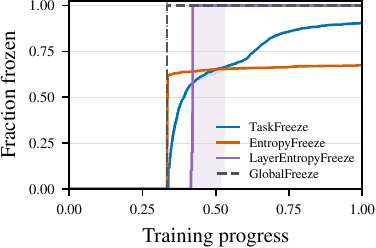}
        \caption{Global freezing dynamics.}
        \label{fig:freezing-global}
    \end{subfigure}
    \hfill
    \begin{subfigure}{0.45\textwidth}
        \centering
        \includegraphics[width=1.0\textwidth]{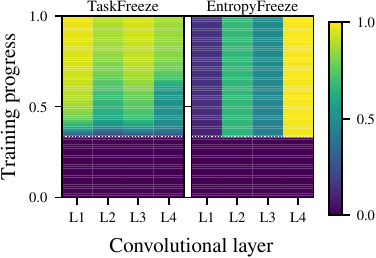}
        \caption{Layer-wise freezing dynamics.}
        \label{fig:freezing-layerwise}
    \end{subfigure}

    \caption{\textbf{Progressive freezing dynamics on Compact-S.}
    (a) Fraction of frozen neurons throughout training for different
    discretization criteria. (b) Layer-wise frozen fractions show that
    TaskFreeze progressively commits individual decisions, whereas
    EntropyFreeze induces a markedly different layer-dependent
    schedule.}
    \label{fig:freezing-dynamics}
\end{figure*}

We compare our task-aware neuron-wise criterion, \emph{TaskFreeze}, with
three freezing baselines. \emph{EntropyFreeze} freezes individual
neurons once their gate entropy has converged; \emph{GlobalFreeze} freezes all neurons simultaneously at the
discretization start and \emph{LayerEntropyFreeze}
applies the layer-wise entropy criterion of \citet{wang2026learning}. We additionally use \emph{FinalArgmax} for the
standard setting in which training remains relaxed and argmax
discretization is applied only after training.

\Cref{fig:freezing-dynamics}a shows the fraction of frozen
neurons on Compact-S. GlobalFreeze commits the complete network at the
discretization start, while LayerEntropyFreeze similarly produces an abrupt
transition once its layer-wise criterion is met. EntropyFreeze freezes
a large fraction of neurons early but subsequently changes little.
TaskFreeze instead produces a gradual trajectory: individual argmax
decisions become eligible throughout the remaining training period,
resulting in a substantially smoother transition toward the discrete
network.

The layer-wise view in \cref{fig:freezing-dynamics}b further highlights
the difference between the two neuron-wise criteria. EntropyFreeze
induces a strongly layer-dependent pattern, freezing most neurons in
deeper layers early while leaving substantially fewer decisions frozen
in the first layer. TaskFreeze distributes freezing more progressively
across depth and time, with the final layer remaining more conservative.
Thus, task-level eligibility does not simply reproduce local entropy
convergence, but induces a distinct discretization schedule based on the
predicted downstream effect of each argmax decision.
\begin{table*}[t]
    \centering
    \caption{\textbf{Final discrete CIFAR-10 test accuracy (\%).}
    Results are reported at the 400k discretization start for Compact
    models and the 25k start for TreeLogicNet models. For Compact-S, we
    compare continued and stopped connection resampling after
    discretization begins. Values are mean $\pm$ sample standard
    deviation across available seeds.}
    \label{tab:final-accuracy}
    \scriptsize
    \setlength{\tabcolsep}{1pt}

    \begin{tabularx}{\textwidth}{
        >{\raggedright\arraybackslash}X
        *{5}{>{\centering\arraybackslash}X}
    }
        \toprule
        & \multicolumn{2}{c}{\textbf{Compact-S}}
        & \textbf{Compact-M}
        & \textbf{TreeLogicNet-S}
        & \textbf{TreeLogicNet-M} \\
        \cmidrule(lr){2-3}
        \textbf{Method}
        & \textbf{Resampling}
        & \textbf{Stopped}
        & \textbf{Resampling}
        &
        & \\
        \midrule

        TaskFreeze (ours)
        & $66.25 \pm 0.27$
        & $65.93 \pm 0.15$
        & $69.34 \pm 0.46$
        & $60.76 \pm 0.46$
        & $71.31 \pm 0.04$ \\

        EntropyFreeze (ours)
        & $64.11 \pm 0.53$
        & $63.88 \pm 0.76$
        & $68.47 \pm 0.54$
        & $59.97 \pm 0.84$
        & $70.95 \pm 0.23$ \\

        GlobalFreeze (ours)
        & $66.41 \pm 0.21$
        & $66.12 \pm 0.37$
        & $69.26 \pm 0.21$
        & $60.50 \pm 0.52$
        & $71.15 \pm 0.04$ \\
        \midrule
        LayerEntropyFreeze
        & $64.10 \pm 0.45$
        & $65.99 \pm 0.17$
        & $67.67 \pm 0.93$
        & $60.63 \pm 0.55$
        & $71.44 \pm 0.01$ \\

        FinalArgmax
        & $64.63 \pm 0.61$
        & $64.63 \pm 0.61$
        & $67.67 \pm 0.39$
        & $59.29 \pm 0.78$
        & $70.49 \pm 0.27$ \\

        CAGE
        & $63.32 \pm 0.63$
        & $64.11 \pm 0.39$
        & $68.23 \pm 0.12$
        & $59.98 \pm 0.35$
        & $71.05 \pm 0.23$ \\

        \bottomrule
    \end{tabularx}
\end{table*}
\subsection{Final Discrete Performance}
\label{sec:exp-performance}

Finally, \cref{tab:final-accuracy} compares the resulting discrete
networks on the CIFAR-10 test set. TaskFreeze is consistently competitive
across both architecture families and model sizes, and improves over
EntropyFreeze in all reported settings. Interestingly, the simple
GlobalFreeze baseline is similarly strong, slightly outperforming
TaskFreeze on Compact-S while remaining comparable on the other models.
This suggests that exposing the network to discrete computation during
training is itself an important factor, while carefully scheduling
individual freeze decisions does not consistently improve final
accuracy.

Both neuron-wise criteria are compared with the layer-wise entropy
strategy, fully relaxed training with final argmax discretization, and
CAGE. Overall, early or progressive argmax freezing generally improves
over applying argmax only after relaxed training, while no single
freezing schedule consistently dominates across architectures. Detailed
results across discretization start points and connection-resampling
settings are provided in~\cref{app:discretization-start}.

\section{Discussion}
\label{sec:discussion}

Our results reveal an important distinction between using task
information to \emph{select} and to \emph{assess} a discrete
intervention. Although the first-order score approximates the downstream
effect of fixing a gate, directly minimizing this surrogate preferentially
selects large interventions for which the local approximation is least
reliable. Second-order corrections substantially reduce the typical
prediction error, confirming an important role of curvature, but remain
strongly dependent on intervention magnitude. In contrast, the argmax
gate learned by the relaxed network is typically close to the relaxed
operating point, making its first-order freeze shock a reliable local
diagnostic.
These findings persist across two architecture families, multiple layers
and training checkpoints, five-seed S-model experiments, and different
discretization starts. For Compact-S, they also persist with both
continued and stopped connection resampling.

Motivated by this distinction, TaskFreeze leaves gate selection to the
learned distribution and uses task information only to determine when a
decision is favorable to commit. Its gradual freezing dynamics differ
markedly from entropy-based criteria, yet its final accuracy is often
comparable to the much simpler GlobalFreeze baseline. This suggests that
adaptation to discrete computation during continued training is itself
an important source of performance, while fine-grained scheduling does
not universally improve the final discrete model. Developing reliable
task-aware gate replacement beyond the local regime therefore remains
an open problem, potentially requiring nonlocal or jointly optimized
discretization criteria.

\section{Conclusion}

We studied DLGN discretization from the perspective of downstream task
risk. We showed that even globally optimal relaxed networks with highly
concentrated gate distributions can remain suboptimal under
componentwise argmax, and derived conditional-risk and first-order
criteria for task-aware gate decisions. Empirically, directly optimizing
the first-order surrogate can leave its local regime, whereas the same
signal accurately assesses the naturally local argmax intervention.
TaskFreeze exploits this distinction for progressive discretization and
is competitive across two convolutional DLGN families. More broadly, our
results identify intervention locality as a central consideration for
task-aware discretization and motivate methods that can account for
nonlocal interactions between discrete decisions.

\section*{AI Use Statement}
Generative AI tools were used to assist with drafting and editing
manuscript text, developing and checking theoretical arguments and
counterexamples, designing diagnostic experiments, interpreting
experimental results, organizing and reviewing literature, and checking
LaTeX structure. AI-assisted mathematical arguments, experimental
analyses, citations, and code were independently reviewed and verified
by the authors. The authors take responsibility for the final content of the work, including all text, claims, results, and artifacts produced
with the assistance of generative AI.

\section*{Reproducibility statement}
Complete proofs are provided in~\cref{app:proofs}. \cref{app:experimental-details} specifies the model architectures, optimization and freezing hyperparameters, diagnostic protocol, and additional experiments. Code required to reproduce the reported experiments is included in the anonymized supplementary material.

\bibliography{references}
\bibliographystyle{iclr2027_conference}
\appendix 

\section{Additional Theory and Proofs}
\label{app:proofs}
\subsection{Proof of~\cref{thm:concentration-insufficient}}
\label{app:anti-argmax-proof}
\begin{proof}
Fix an integer $m\geq 1$ and consider any $N>2m$.
Construct a DLGN with $N$ trainable neurons whose candidate
gates include the constant gates $g_0\equiv 0$ and $g_1\equiv 1$.
Let its output be
\begin{equation}
   f_{\vec{g}}(X)
=
z_0
+
\frac{z^*-z_0}{m}
\sum_{i\le N} g_i.
    \nonumber
\end{equation}
For every trainable neuron $i$, define the relaxed gate distribution
\begin{equation}
    q_{i}^*(g_1)=\frac{m}{N},
    \qquad
    q_{i}^*(g_0)=1-\frac{m}{N},
    \nonumber
\end{equation}
with zero probability assigned to all remaining gates.
Under the relaxed forward pass, each neuron has output $m/N$.
Consequently,
\begin{equation}
    F_{q^*}(X)
    =
    z_0
    +
    \frac{z^*-z_0}{m}
    \sum_{i\le N}\frac{m}{N}
    =
    z^*. \nonumber
\end{equation}
By assumption, $z^*$ globally minimizes the population risk over the
prediction space attainable by the relaxation. Hence $q^*$ is a
global minimizer of the relaxed population risk.

Since $N>2m$, we have $m/N<1/2$. Thus $g_0$ is the unique modal gate
of every $q_{i}^*$, and componentwise argmax yields
$\vec\garg
    =
    (g_0,\ldots,g_0),
$ and $
    f_{\vec\garg}(X)=z_0
$.
On the other hand, any discrete configuration containing exactly $m$
copies of $g_1$ and $N-m$ copies of $g_0$ satisfies
$
    f_{\vec{g}}(X)=z^*
$.
It therefore attains the globally minimal population risk and is, in
particular, optimal among the discrete configurations. Hence, with $\risk(z):=\ex[(X,Y)\sim\setify{D}]{\ell(z,Y)}$, $ \rdisc^*
    =
    \risk(z^*)$
whereas
$
    \rdisc(\vec{g}^{\mathrm{arg}})
    =
    \risk(z_0)
$.
By~\cref{eq:risk-separation},
\begin{equation}
    \gapopt
    =
    \rdisc(\vec\garg)
    -
    \rdisc^*
    =
    \risk(z_0)-\risk(z^*)
    =
    c>0,
    \nonumber
\end{equation}
independently of $N$.
It remains to show local concentration. For every neuron,
\begin{equation}
    q_{i}^*
    =
    \left(1-\frac{m}{N}\right)\delta_{g_0}
    +
    \frac{m}{N}\delta_{g_1}
    \longrightarrow
    \delta_{g_0}
    \qquad\text{as }N\to\infty.
    \nonumber
\end{equation}
Let $D$ be any local dispersion measure as defined earlier.
Continuity of $D$ at $\delta_{g_0}$ therefore implies
$
    D(q_{i}^*)
    \longrightarrow
    D(\delta_{g_0})
    =
    0
$.
Since all $N$ trainable neurons have the same distribution,
\begin{equation}
    \max_{1\le i\le N}D(q_{i}^*)
    \longrightarrow 0.\nonumber
\end{equation}
Thus the local gate distributions become arbitrarily concentrated while
the argmax optimality gap remains equal to the positive constant $c$.
\end{proof}
\subsection{Examples for the Local-Concentration Counterexample}
\label{app:counterexamples}

We instantiate \cref{thm:concentration-insufficient} for two standard
supervised-learning losses. In both cases, the population risk admits a
finite global minimizer $z^*$ whose risk is strictly smaller than that of
an alternative prediction $z_0$. The construction in
\cref{thm:concentration-insufficient} then yields globally optimal relaxed
DLGNs whose local gate distributions converge uniformly to point masses,
while componentwise argmax retains a strictly positive discrete optimality
gap.

\paragraph{Squared loss.}
Consider a scalar prediction $z\in\mathbb{R}$ and squared loss
$\ell(z,y)=(z-y)^2$.
Let $Y$ be any random variable with finite second moment and mean $\ex[(X,Y)\sim\setify{D}]{Y}=z^*$.
The corresponding population risk is
\begin{align}
    \risk(z)
    =
    \ex[(X,Y)\sim\setify{D}]{(z-Y)^2}
    =
    (z-z^*)^2+\operatorname{Var}_{(X,Y)\sim\setify{D}}(Y),\nonumber
\end{align}
and is therefore globally minimized at $z=z^*$.
For any $z_0\neq z^*$,
\begin{equation}
    \ex[(X,Y)\sim\setify{D}]
    {\ell(z_0,Y)-\ell(z^*,Y)}
    =
    (z_0-z^*)^2
    =:c_{\mathrm{MSE}}>0.\nonumber
\end{equation}
Hence the assumptions of \cref{thm:concentration-insufficient} hold.
For example, choosing
$Y\sim\operatorname{Bernoulli}(0.64)$, $z^*=0.64$ and $z_0=0$
gives $c_{\mathrm{MSE}}
    =
    (0.64)^2
    =
    0.4096
$.
Thus the theorem constructs a sequence of globally optimal relaxed DLGNs
whose gate distributions become arbitrarily concentrated, while the
argmax discretization remains separated from the optimal discrete risk by
$0.4096$.

\paragraph{Binary cross-entropy.}
Consider binary classification with label $Y\sim\operatorname{Bernoulli}(\pi)$, $
    \pi\in(0,1)$
and let $z\in\mathbb{R}$ denote the class-$1$ logit relative to a fixed
class-$0$ logit of zero. The predicted class-$1$ probability is
$\sigma(z)$, where $\sigma$ denotes the logistic sigmoid. The binary
cross-entropy loss can be written as
    $\ell(z,y)
    =
    \log(1+e^z)-yz$.
Its population risk is therefore
\begin{equation}
   \risk(z)
    =\log(1+e^z)-\ex[(X,Y)\sim\setify{D}]
    {Y} z=
    \log(1+e^z)-\pi z.\nonumber
\end{equation}
Differentiating gives
\begin{equation}
    \risk'(z)
    =
    \sigma(z)-\pi,
    \qquad
    \risk''(z)
    =
    \sigma(z)(1-\sigma(z))>0.\nonumber
\end{equation}
Consequently, the unique global minimizer is
    $z^*
    =
    \sigma^{-1}(\pi)$.
Taking $z_0=0$ and any $\pi\neq 1/2$ yields
\begin{align}
    c_{\mathrm{CE}}
    :=
    \risk(0)-\risk(z^*)
    =
    \log 2
    -
    \log(1+e^{z^*})
    +
    \pi z^*
    >0.\nonumber
\end{align}
Thus binary cross-entropy also satisfies the assumptions of
\cref{thm:concentration-insufficient}.
Equivalently, choosing any fixed $m>0$ and setting
$
    \pi=\sigma(m)
$
gives $z^*=m$ and
\begin{equation}
    c_{\mathrm{CE}}
    =
    \log 2
    -
    \log(1+e^m)
    +
    m\sigma(m)
    >0.\nonumber
\end{equation}
Applying the construction of \cref{thm:concentration-insufficient} with
$z_0=0$ and $z^*=m$ therefore yields a sequence for which the relaxed
network remains globally optimal and every local gate
distribution converges to a point mass, while the argmax optimality gap
remains equal to the positive constant $c_{\mathrm{CE}}$.

\subsection{Proof of~\cref{eq:ce-risk-bound}}
\label{app:ce-proof}
\begin{proof}
Before any gate is fixed, the conditional expected risk equals $\ex[\vec{G}\sim q]
     {\rdisc(\vec{G})}$.
Suppose that $\vec g_{<i}$ has been fixed. By the law of total expectation,
\begin{equation}
    \ex[G_i\sim q_i]
    {
        C_i^{\mathrm{CE}}(G_i)
    }
    =
    \ex[\vec{G}_{\geq i}\sim q_{\geq i}]
    {
        \rdisc(\vec g_{<i},\vec{G}_{\geq i})
    }.\nonumber
\end{equation}
Since the minimum of a finite set cannot exceed its weighted average, $g_i^{\mathrm{CE}}$ satisfies
\begin{equation}
    \min_{g\in\mathcal G_i}C_i^{\mathrm{CE}}(g)=C_i^{\mathrm{CE}}(g_i^{\mathrm{CE}})
    \leq
    \ex[G_i\sim q_i]
    {
        C_i^{\mathrm{CE}}(G_i)
    }.\nonumber
\end{equation}
Thus fixing each gate according to $\argmin_{g}C_i^{\mathrm{CE}}(g)$ cannot increase
the conditional expected risk. Repeating this argument for
$i=1,\ldots,N$ gives
\begin{equation}
    \rdisc(\vec{g}^{\mathrm{CE}})\leq \ex[\vec{G}\sim q]
     {\rdisc(\vec{G})}.\nonumber
\end{equation}
The lower bound follows immediately from the definition of
$\rdisc^*$.
\end{proof}

\subsection{Proof of~\cref{thm:ce-relaxed}}
\begin{proof}
Introduce 
$
    C_i^\mu(k\mid g_{<i})
    :=
    \ex[(X,Y)\sim\setify{D}]
    {
        \ell(\mu_{i,k}(X),Y)
    }
$ as the risk of the mean completion prediction.
By the triangle inequality,
\begin{align}
    \left|
        C_i^{\mathrm{CE}}(k)
        -
        C_i^{\mathrm{rel}}(k)
    \right|
    \leq
    \left|
        C_i^{\mathrm{CE}}(k)
        -
        C_i^\mu(k)
    \right|+
    \left|
        C_i^\mu(k)
        -
        C_i^{\mathrm{rel}}(k)
    \right|.\nonumber
\end{align}
For the first term, Lipschitz continuity gives
\begin{align}
    \left|
        C_i^{\mathrm{CE}}(k)
        -
        C_i^\mu(k)
    \right|
    \leq
    L
    \ex[(X,Y)~\sim\setify{D},\vec{G}_{>i}\sim q_{>i}]
    {
        \norm{
            f_{g_{<i},k,\vec{G}_{>i}}(X)
            -
            \mu_{i,k}(X)
        }
    }
    \leq
    L V_{i,k},\nonumber
\end{align}
where the final inequality follows from Cauchy--Schwarz.
Similarly,
\begin{equation}
    \left|
        C_i^\mu(k)
        -
        C_i^{\mathrm{rel}}(k)
    \right|
    \leq
    L P_{i,k}.\nonumber
\end{equation}
Combining both bounds proves~\cref{eq:ce-relaxed-bound}.
\end{proof}

\subsection{Proof of~\cref{thm:first-order-selection}}
\label{app:first-order-proof}

\begin{proof}
For fixed $i$, $k$, and sample $(x,y)$, let
$
    \phi_i(a;x,y)
    :=
    \ell(F_{i,a}(x),y)
$,
where $F_{i,a}$ denotes the current hybrid network with the scalar
activation of neuron $i$ replaced by $a$, leaving the downstream
computation unchanged. Thus, the relaxed and intervened networks
correspond to $a_i(x)$ and $a_{i,k}(x)$, respectively.
Taylor's theorem gives
\begin{equation}
    \phi_i(a_{i,k};x,y)
    =
    \phi_i(a_i;x,y)
    +
    \phi_i'(a_i;x,y)\Delta a_{i,k}(x)
    +
    r_{i,k}(x,y).\nonumber
\end{equation}
By the curvature assumption,
\begin{equation}
    |r_{i,k}(x,y)|
    \leq
    \frac{M_i}{2}\Delta a_{i,k}(x)^2.\nonumber
\end{equation}
Moreover,
$
    \phi_i'(a_i;X,Y)
    =
    \frac{\partial \ell}{\partial a_i}
$
and taking expectations therefore yields
\begin{equation}
    \left|
        \crel{i}{k}
        -
        \left[
            \crel{i}{\mathrm{soft}}+S_i(k)
        \right]
    \right|
    \leq
    \frac{M_i}{2}
    \ex[(X,Y)\sim\setify{D}]
    {
        \Delta a_{i,k}(X)^2
    }.
    \label{eq:taylor-remainder}
\end{equation}
Combining~\cref{eq:taylor-remainder} with
\cref{thm:ce-relaxed} and the triangle inequality gives
\begin{align}
    \left|
        \cce{i}{k}
        -
        \left[
            \crel{i}{\mathrm{soft}}+S_i(k)
        \right]
    \right|
    &\leq
    \left|
        \cce{i}{k}-\crel{i}{k}
    \right|
    +
    \left|
        \crel{i}{k}
        -
        \left[
            \crel{i}{\mathrm{soft}}+S_i(k)
        \right]
    \right|
    \nonumber\\
    &\leq
    L_\ell(V_{i,k}+P_{i,k})
    +
    \frac{M_i}{2}
    \ex[(X,Y)\sim\setify{D}]
    {
        \Delta a_{i,k}(X)^2
    }
    =
    \epsilon_{i,k},\nonumber
\end{align}
which proves~\cref{eq:first-order-ce-bound}.
Now let
$
    k_i^{\mathrm{CE}}
    \in
    \argmin_{k\in\cG}\cce{i}{k}.
$
Since $k_i^{\mathrm{FO}}$ minimizes $S_i(k)$,
$
    S_i(k_i^{\mathrm{FO}})
    \leq
    S_i(k_i^{\mathrm{CE}}).
$
Applying~\cref{eq:first-order-ce-bound} to both candidates gives
\begin{align}
    \cce{i}{k_i^{\mathrm{FO}}}
    &\leq
    \crel{i}{\mathrm{soft}}
    +
    S_i(k_i^{\mathrm{FO}})
    +
    \epsilon_i
   \leq
    \crel{i}{\mathrm{soft}}
    +
    S_i(k_i^{\mathrm{CE}})
    +
    \epsilon_i
    \leq
    \cce{i}{k_i^{\mathrm{CE}}}
    +
    2\epsilon_i.\nonumber
\end{align}
Since
$\cce{i}{k_i^{\mathrm{CE}}}
=
\min_{k\in\cG}\cce{i}{k}$,
this proves~\cref{eq:first-order-ce-bound}.
For proving~\cref{eq:fo-beats-argmax}, let
$
    k_i^{\mathrm{arg}}
    \in
    \argmax_{k\in\cG} q_i(k).
$
Applying~\cref{eq:first-order-ce-bound} to
$k_i^{\mathrm{FO}}$ and $k_i^{\mathrm{arg}}$ gives
\begin{align}
    \cce{i}{k_i^{\mathrm{FO}}}
    -
    \cce{i}{k_i^{\mathrm{arg}}}
    \leq
    S_i(k_i^{\mathrm{FO}})
    -
    S_i(k_i^{\mathrm{arg}})
    +
    2\epsilon_i.\nonumber
\end{align}
Hence
\begin{equation}
    S_i(k_i^{\mathrm{arg}})
    -
    S_i(k_i^{\mathrm{FO}})
    >
    2\epsilon_i
    \quad\Longrightarrow\quad
    \cce{i}{k_i^{\mathrm{FO}}}
    <
    \cce{i}{k_i^{\mathrm{arg}}}.\nonumber
\end{equation}
\end{proof}

\section{Additional Experimental Details}
\label{app:experimental-details}

\subsection{Model Architectures}
\label{app:model-architectures}

We consider two convolutional DLGN families on CIFAR-10. Our
\emph{Compact-S/M} models follow the convolutional architecture of
\citet{wang2026learning}, while \emph{TreeLogicNet-S/M} follow the
TreeLogicNet architecture of \citet{petersen2024convolutional}.
The two families differ substantially in their convolutional
parameterization: Compact models use a single Boolean operation per
convolutional neuron together with connection learning, whereas
TreeLogicNet uses depth-$3$ Boolean trees containing seven trainable
gates per convolutional kernel.

\paragraph{Compact models.}
For CIFAR-10, the Compact architecture consists of four convolutional
layers followed by two fully connected logic layers and a GroupSum
classification head. The width parameter is $k=128$ for Compact-S and
$k=256$ for Compact-M. Following \citet{wang2026learning}, the input is
thermometer encoded using $N=3$ thresholds for S and $N=7$ thresholds
for M. Each convolutional kernel has a $3\times3$ receptive field.
The exact architectures used in our experiments are summarized in
\cref{tab:compact-architectures}.

\begin{table*}[t]
    \centering
    \caption{\textbf{Compact CIFAR-10 architectures.}
    Compact-S uses $k=128$ and $N=3$ input thresholds; Compact-M uses
    $k=256$ and $N=7$. Convolutional layers use $3\times3$ receptive
    fields and padding $1$.}
    \label{tab:compact-architectures}
    \scriptsize
    \setlength{\tabcolsep}{5pt}
    \begin{tabularx}{\textwidth}{
        l
        >{\centering\arraybackslash}X
        >{\centering\arraybackslash}X
        >{\centering\arraybackslash}X
        >{\centering\arraybackslash}X
    }
        \toprule
        \textbf{Stage}
        & \textbf{Operation}
        & \textbf{Stride}
        & \textbf{Compact-S ($k=128$)}
        & \textbf{Compact-M ($k=256$)} \\
        \midrule
        Input
        & Thermometer encoding
        & --
        & $(9,32,32)$
        & $(21,32,32)$ \\

        Conv 1
        & $3\times3$ logic conv.
        & $2$
        & $(128,16,16)$
        & $(256,16,16)$ \\

        Conv 2
        & $3\times3$ logic conv.
        & $1$
        & $(128,16,16)$
        & $(256,16,16)$ \\

        Conv 3
        & $3\times3$ logic conv.
        & $2$
        & $(512,8,8)$
        & $(1024,8,8)$ \\

        Conv 4
        & $3\times3$ logic conv.
        & $1$
        & $(512,8,8)$
        & $(1024,8,8)$ \\

        Flatten
        & --
        & --
        & $32{,}768$
        & $65{,}536$ \\

        Logic 1
        & Logic layer
        & --
        & $80{,}000$
        & $160{,}000$ \\

        Logic 2
        & Logic layer
        & --
        & $80{,}000$
        & $160{,}000$ \\

        Output
        & GroupSum
        & --
        & $10$
        & $10$ \\
        \bottomrule
    \end{tabularx}
\end{table*}

During training, Compact neurons jointly represent gate and connection
choices: the 16 candidate Boolean operations are associated with
independently sampled input pairs from the local receptive field.
Connection learning proceeds through adaptive resampling of these
candidates. After discretization, only one Boolean operation and its
corresponding input pair remain per neuron.

\paragraph{TreeLogicNet models.}
The TreeLogicNet models follow the CIFAR-10 architecture of
\citet{petersen2024convolutional}. Each convolutional block uses
depth-$3$ binary logic trees: eight sampled inputs are combined by seven
trainable Boolean gates into one output. Each block is followed by fixed
$2\times2$ OR pooling with stride $2$. Four such blocks reduce the
spatial resolution from $32\times32$ to $2\times2$, after which three
randomly connected logic layers and a GroupSum head perform
classification. The width parameter is $k=32$ for TreeLogicNet-S and
$k=256$ for TreeLogicNet-M. Both models use three thresholds for the
CIFAR-10 input encoding.

\begin{table*}[t]
    \centering
    \caption{\textbf{TreeLogicNet CIFAR-10 architectures.}
Each convolutional kernel is a depth-$3$ Boolean tree with seven
trainable gates, followed by fixed $2\times2$ OR pooling. S and M
correspond to $k=32$ and $k=256$, respectively.}
    \label{tab:treelogicnet-architectures}
    \scriptsize
    \setlength{\tabcolsep}{5pt}
    \begin{tabularx}{\textwidth}{
        l
        >{\centering\arraybackslash}X
        >{\centering\arraybackslash}X
        >{\centering\arraybackslash}X
    }
        \toprule
        \textbf{Stage}
        & \textbf{Operation}
        & \textbf{TreeLogicNet-S ($k=32$)}
        & \textbf{TreeLogicNet-M ($k=256$)} \\
        \midrule
        Input
        & Three-threshold encoding
        & $(9,32,32)$
        & $(9,32,32)$ \\

        Conv 1
        & $3\times3$, tree depth $3$
        & $32$ channels
        & $256$ channels \\

        Pool 1
        & $2\times2$ OR, stride $2$
        & $(32,16,16)$
        & $(256,16,16)$ \\

        Conv 2
        & $3\times3$, tree depth $3$
        & $128$ channels
        & $1024$ channels \\

        Pool 2
        & $2\times2$ OR, stride $2$
        & $(128,8,8)$
        & $(1024,8,8)$ \\

        Conv 3
        & $3\times3$, tree depth $3$
        & $512$ channels
        & $4096$ channels \\

        Pool 3
        & $2\times2$ OR, stride $2$
        & $(512,4,4)$
        & $(4096,4,4)$ \\

        Conv 4
        & $3\times3$, tree depth $3$
        & $1024$ channels
        & $8192$ channels \\

        Pool 4
        & $2\times2$ OR, stride $2$
        & $(1024,2,2)$
        & $(8192,2,2)$ \\

        Flatten
        & --
        & $4096$
        & $32{,}768$ \\

        Logic 1
        & Random logic layer
        & $40{,}960$
        & $327{,}680$ \\

        Logic 2
        & Random logic layer
        & $20{,}480$
        & $163{,}840$ \\

        Logic 3
        & Random logic layer
        & $10{,}240$
        & $81{,}920$ \\

        Output
        & GroupSum
        & $10$
        & $10$ \\
        \bottomrule
    \end{tabularx}
\end{table*}

The two architecture families therefore provide complementary testbeds.
Compact-S/M use shallow single-gate convolutional kernels with learned
connectivity, whereas TreeLogicNet-S/M retain deeper tree-structured
convolution and fixed random connectivity. This allows
us to test whether the observed discretization behavior persists across
substantially different convolutional DLGN parameterizations.

\subsection{Training and Freezing Hyperparameters}
\label{app:training-details}

All models are trained on the 45k/5k training--validation split of
CIFAR-10 described in \cref{sec:experiments}, using cross-entropy loss
and GroupSum aggregation. Compact models use Adam and are trained for
600k steps, while TreeLogicNet models use AdamW with weight decay
$0.002$ and are trained for 200k steps. Unless stated otherwise, all
reported S-model experiments use five independent seeds.

For progressive discretization, freezing begins only after an initial
relaxed training period. TaskFreeze evaluates the first-order argmax
freeze shock every $K$ training steps, maintains an EMA with factor
$\beta$, and freezes a neuron once its smoothed score remains negative
for $P$ consecutive evaluations. EntropyFreeze uses the same
neuron-wise progressive framework but determines eligibility from
convergence of the local gate entropy. LayerEntropyFreeze follows the
shallow-to-deep layer-wise entropy criterion of
\citet{wang2026learning}, while GlobalFreeze simultaneously fixes all
neurons to their current argmax gates at the discretization start.
FinalArgmax keeps all gates relaxed throughout training and applies
argmax only for discrete evaluation.

The exact optimization and freezing hyperparameters used in our
experiments are summarized in \cref{tab:training-hyperparameters}.

\begin{table}[t]
    \centering
    \caption{\textbf{Training and freezing hyperparameters.}
    $H$ denotes entropy convergence and $S$ first-order freeze shock.
    Values are shared across the Compact and TreeLogicNet experiments
    unless shown otherwise.}
    \label{tab:training-hyperparameters}
    \scriptsize
    \setlength{\tabcolsep}{4pt}
    \begin{tabularx}{\linewidth}{
        >{\raggedright\arraybackslash}X
        >{\centering\arraybackslash}X
        >{\centering\arraybackslash}X
    }
        \toprule
        \textbf{Parameter} & \textbf{Compact} & \textbf{TreeLogicNet} \\
        \midrule
        Optimizer & Adam & AdamW \\
        Weight decay & $0$ & $0.002$ \\
        Training steps & $600$k & $200$k \\
        Batch size & $128$ & $128$ \\
        Learning rate & $0.02$ & $0.02$ \\
        LR schedule & Constant & Constant \\
        Scoring interval $K$ & $K_H=1$, $K_S=1000$
                             & $K_H=1$, $K_S=1000$ \\
        EMA factor $\beta$ & $\beta_H=0.99$, $\beta_S=2^{-K_S/10\,000}$
                           & $\beta_H=0.99$, $\beta_S=2^{-K_S/10\,000}$ \\
        Patience $P$ & $P_H=200$, $P_S=3$
                     & $P_H=200$, $P_S=3$ \\
        Entropy criterion & $|H_t-\bar H_{t-1}|<5\times10^{-4}$
                          & $|H_t-\bar H_{t-1}|<5\times10^{-4}$ \\
        \bottomrule
    \end{tabularx}
\end{table}

\subsection{Diagnostic Protocol}
\label{app:diagnostic-protocol}

Our diagnostic experiments measure the effect of discretizing individual
neurons while keeping the remainder of the network relaxed. For neuron
$i$ and candidate gate $k$, we define the realized intervention effect
as
\[
    \Delta C_i^{\mathrm{rel}}(k)
    :=
    C_i^{\mathrm{rel}}(k)
    -
    C_i^{\mathrm{rel}}(\mathrm{soft}),
\]
where only neuron $i$ is replaced by the discrete gate $k$. Negative
values therefore indicate that the intervention decreases the task loss.

For the entropy and argmax-score diagnostics, the intervention gate is
$k_i^{\mathrm{arg}}=\argmax_k q_i(k)$. For score-based selection, we
instead use $k_i^*=\argmin_k S_i(k)$. The corresponding first-order
scores are computed from the same relaxed network state without
additional downstream forward passes. Unless explicitly stated
otherwise, diagnostic loss changes and scores are evaluated on the
validation set.

To test approximation locality, we additionally measure the squared
activation displacement
\[
    D_i(k)
    :=
    \mathbb{E}_{(X,Y)\sim\mathcal D}
    [\Delta a_{i,k}(X)^2],
    \qquad
    \Delta a_{i,k}:=a_{i,k}-a_i.
\]
For each neuron, the all-candidate analysis evaluates all 16 Boolean
gates and relates $D_i(k)$ to the error of the first- and second-order
loss predictions. The second-order score is
\[
    S_i^{(2)}(k)
    :=
    S_i(k)
    +
    \frac{1}{2}
    \Delta a_{i,k}^{\top}H_i\Delta a_{i,k},
\]
where the quadratic form is evaluated using Hessian--vector products
without explicitly constructing the Hessian. We report Spearman
correlations across candidates within each neuron and aggregate these
statistics across neurons.

For the held-out selection experiment in \cref{fig:locality}c, the
validation data are divided into two disjoint halves. The first half is
used to compute first-order task-aware choices and activation
displacements, while realized loss effects are evaluated exclusively on
the second half. This separates gate selection from evaluation and
avoids measuring the apparent advantage of a selected intervention on
the same examples used to select it.

\subsection{Diagnostics for TreeLogicNet}
\label{app:treelogicnet-diagnostics}

We repeat the main diagnostic analyses on TreeLogicNet-S to assess
whether the observed behavior is specific to the Compact architecture.
\Cref{fig:correlation-hexbin-treelogicnet} shows the corresponding
freeze-effect analysis at the 25k discretization start. As for Compact-S,
gate entropy provides only a weak predictor of the realized argmax freeze
effect, while the first-order argmax freeze shock is consistently
positively associated with the measured loss change. Directly optimizing
the first-order score over candidate gates is substantially less
reliable and exhibits pronounced layer dependence. This confirms that
the distinction between using first-order information for
\emph{selection} and for \emph{assessment} is not specific to the
Compact architecture.

\begin{figure*}[t]
    \centering
    \includegraphics[width=1.0\textwidth]{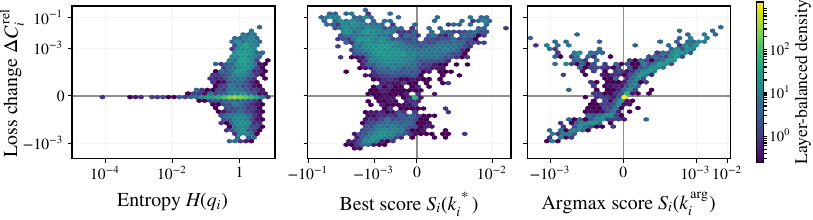}
    \vspace{-0.5cm}
    \caption{\textbf{Predicting freeze effects on TreeLogicNet-S.}
    Measured loss change from freezing to the argmax gate versus gate
    entropy (left), from freezing to the task-aware gate
    $k_i^{*}=\argmin_k S_i(k)$ versus its first-order score
    (center), and from freezing to the argmax gate versus its first-order
    score (right). As for Compact-S, score-based selection is less
    reliably aligned with the realized effect, whereas the argmax freeze
    shock is strongly predictive.}
    \label{fig:correlation-hexbin-treelogicnet}
\end{figure*}
We additionally compare the resulting progressive freezing dynamics in
\cref{fig:freezing-dynamics-treelogicnet}. The qualitative schedules
differ from Compact-S: EntropyFreeze commits a large fraction of neurons
soon after discretization begins and ultimately freezes more neurons
than TaskFreeze. Nevertheless, TaskFreeze again produces a gradual
neuron-wise transition rather than the abrupt commitment induced by
GlobalFreeze or LayerFreeze. The layer-wise view further shows that the
two neuron-wise criteria are not interchangeable: EntropyFreeze becomes
increasingly aggressive toward deeper convolutional layers, whereas
TaskFreeze distributes its decisions more evenly across depth. Thus,
the precise freezing trajectory is architecture dependent, while
task-based and entropy-based eligibility remain qualitatively distinct.
\begin{figure*}[t]
    \centering
    \begin{subfigure}{0.45\textwidth}
        \centering
        \includegraphics[width=1.0\textwidth]{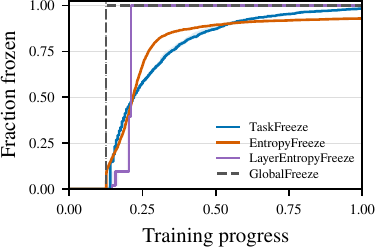}
        \caption{Global freezing dynamics.}
        \label{fig:freezing-global-treelogicnet}
    \end{subfigure}
    \hfill
    \begin{subfigure}{0.45\textwidth}
        \centering
        \includegraphics[width=1.0\textwidth]{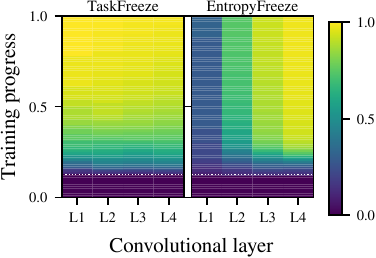}
        \caption{Layer-wise freezing dynamics.}
        \label{fig:freezing-layerwise-treelogicnet}
    \end{subfigure}

    \caption{\textbf{Progressive freezing dynamics on TreeLogicNet-S.}
    (a) Fraction of frozen neurons throughout training for different
    discretization criteria. (b) Layer-wise frozen fractions show that
    TaskFreeze progressively commits individual decisions, whereas
    EntropyFreeze induces a markedly different layer-dependent
    schedule.}
    \label{fig:freezing-dynamics-treelogicnet}
\end{figure*}

\subsection{Sensitivity to Discretization Start}
\label{app:discretization-start}

We additionally evaluate the sensitivity of the freezing methods to the
point at which discretization begins. Tables~\ref{tab:compact-s-start}
and~\ref{tab:treelogicnet-s-start} report final discrete CIFAR-10 test
accuracy across multiple start points for Compact-S and TreeLogicNet-S,
respectively. Overall, the relative performance of the methods is
reasonably stable across the considered schedules.
\begin{table*}[h]
    \centering
    \caption{\textbf{Sensitivity to discretization start on Compact-S.}
    Final discrete CIFAR-10 test accuracy (\%) for discretization starting
    at 200k, 300k, or 400k training steps, with connection resampling
    either continued or stopped after discretization begins. Values are
    mean $\pm$ sample standard deviation across available seeds.}
    \label{tab:compact-s-start}
    \scriptsize
    \setlength{\tabcolsep}{-3pt}

    \begin{tabularx}{\textwidth}{
        >{\raggedright\arraybackslash}X
        *{6}{>{\centering\arraybackslash}X}
    }
        \toprule
        & \multicolumn{3}{c}{\textbf{Continued resampling}}
        & \multicolumn{3}{c}{\textbf{Stopped resampling}} \\
        \cmidrule(lr){2-4}
        \cmidrule(lr){5-7}
        \textbf{Method}
        & \textbf{200k}
        & \textbf{300k}
        & \textbf{400k}
        & \textbf{200k}
        & \textbf{300k}
        & \textbf{400k} \\
        \midrule

        TaskFreeze (ours)
        & $66.13 \pm 0.20$
        & $66.52 \pm 0.17$
        & $66.25 \pm 0.27$
        & $65.55 \pm 0.29$
        & $65.77 \pm 0.15$
        & $65.93 \pm 0.15$ \\

        EntropyFreeze (ours)
        & $64.30 \pm 0.83$
        & $64.32 \pm 0.72$
        & $64.11 \pm 0.53$
        & $64.28 \pm 0.48$
        & $64.00 \pm 0.58$
        & $63.88 \pm 0.76$ \\

        GlobalFreeze (ours)
        & $66.27 \pm 0.51$
        & $66.17 \pm 0.10$
        & $66.41 \pm 0.21$
        & $65.84 \pm 0.15$
        & $66.04 \pm 0.26$
        & $66.12 \pm 0.37$ \\

        \midrule

        LayerEntropyFreeze
        & $64.75 \pm 0.80$
        & $65.24 \pm 0.53$
        & $64.10 \pm 0.45$
        & $65.94 \pm 0.14$
        & $65.95 \pm 0.26$
        & $65.99 \pm 0.17$ \\

        FinalArgmax
        & $64.63 \pm 0.61$
        & $64.63 \pm 0.61$
        & $64.63 \pm 0.61$
        & $64.63 \pm 0.61$
        & $64.63 \pm 0.61$
        & $64.63 \pm 0.61$ \\

        CAGE
        & $63.54 \pm 0.65$
        & $63.42 \pm 0.88$
        & $63.32 \pm 0.63$
        & $63.62 \pm 0.39$
        & $64.15 \pm 0.18$
        & $64.11 \pm 0.39$ \\

        \bottomrule
    \end{tabularx}
\end{table*}
Across both architectures, the main conclusions are robust to the
discretization start. TaskFreeze remains competitive across all tested
checkpoints, while no single freezing schedule consistently dominates.
For Compact-S, this behavior also persists when connection resampling is
stopped after discretization begins. Together with the layer-,
checkpoint-, and seed-wise diagnostics in the main text, these results
indicate that the observed distinction between task-based assessment and
entropy-based freezing is not tied to a particular training checkpoint
or connection-learning schedule.
\begin{table*}[t]
    \centering
    \caption{\textbf{Sensitivity to discretization start on
    TreeLogicNet-S.}
    Final discrete CIFAR-10 test accuracy (\%) for discretization
    starting at 10k, 25k, or 50k training steps. Values are mean $\pm$
    sample standard deviation across five seeds.}
    \label{tab:treelogicnet-s-start}
    \scriptsize
    \setlength{\tabcolsep}{5pt}

    \begin{tabularx}{\textwidth}{
        >{\raggedright\arraybackslash}X
        *{3}{>{\centering\arraybackslash}X}
    }
        \toprule
        \textbf{Method}
        & \textbf{10k}
        & \textbf{25k}
        & \textbf{50k} \\
        \midrule

        TaskFreeze (ours)
        & $60.42 \pm 0.63$
        & $60.76 \pm 0.46$
        & $60.26 \pm 0.80$ \\

        EntropyFreeze (ours)
        & $59.89 \pm 0.45$
        & $59.97 \pm 0.84$
        & $60.13 \pm 0.57$ \\

        GlobalFreeze (ours)
        & $60.05 \pm 0.82$
        & $60.50 \pm 0.52$
        & $60.78 \pm 0.28$ \\

        \midrule

        LayerEntropyFreeze
        & $60.72 \pm 0.57$
        & $60.63 \pm 0.55$
        & $60.81 \pm 0.42$ \\

        FinalArgmax
        & $59.29 \pm 0.78$
        & $59.29 \pm 0.78$
        & $59.29 \pm 0.78$ \\

        CAGE
        & $60.13 \pm 0.49$
        & $59.98 \pm 0.35$
        & $59.90 \pm 0.71$ \\

        \bottomrule
    \end{tabularx}
\end{table*}
\end{document}